\def\eqref#1{equation~\ref{#1}}
\def\1{\bm{1}}
\DeclareMathAlphabet{\mathsfit}{\encodingdefault}{\sfdefault}{m}{sl}
\SetMathAlphabet{\mathsfit}{bold}{\encodingdefault}{\sfdefault}{bx}{n}
\newcommand{\E}{\mathbb{E}}
\newcommand{\KL}{D_{\mathrm{KL}}}
\DeclareMathOperator*{\argmax}{arg\,max}
\title{Input Adaptive Bayesian Model Averaging}
\author{
  Yuli Slavutsky$^{1}$\thanks{Equal contribution.} \quad
  Sebastian Salazar$^{2}$\footnotemark[1] \quad
  David M.~Blei$^{1,2}$ \\
  $^{1}$Department of Statistics \quad
  $^{2}$Department of Computer Science \\
  Columbia University, New York, NY 10027, USA \\
  \texttt{\{yuli.slavutsky, ss5971, david.blei\}@columbia.edu}
}
\newtheorem{theorem}{Theorem}[section]
\newtheorem*{theorem*}{Theorem}
\newcommand\methodname{IA-BMA}
\newcommand\D{\mathcal{D}}
\begin{document}

\maketitle

\begin{abstract}
This paper studies prediction with multiple candidate models, where the goal is to combine their outputs. This task is especially challenging in heterogeneous settings, where different models may be better suited to different inputs. 
We propose input adaptive Bayesian Model Averaging (IA-BMA), a Bayesian method that assigns model weights conditional on the input.
IA-BMA employs an input adaptive prior, and yields a posterior distribution that adapts to each prediction, which we estimate with amortized variational inference.
We derive formal guarantees for its performance, relative to any single predictor selected per input. We evaluate IABMA across regression and classification tasks, studying data from personalized cancer treatment, credit-card fraud detection, and UCI datasets. IA-BMA consistently delivers more accurate and better-calibrated predictions than both non-adaptive baselines and existing adaptive methods.
\end{abstract}

\section{Introduction}

Many applications require \emph{adaptive predictions}. In personalized medicine, different patients respond differently to the same treatment \citep{mahajan2023ensemble}; in fairness-sensitive domains, predictions need to adapt to subpopulations \citep{wang2019racial, grother2019ongoing}; and in fraud detection, behavioral data is often heteroskedastic and varies substantially across inputs \citep{varmedja2019credit}. 

When the data is complex, selecting a single model that performs well across all inputs is challenging. This motivates \emph{model averaging} (MA), which produces an \emph{ensemble} of models. 
This idea dates back at least to the 1960s (see, e.g., \citep{clemen1989combining}  for a historical perspective). 

We denote data points by $x\in \mathcal{X}$, labels by $y \in \mathcal{Y}$, and the space of probability distributions on labels by $\mathcal{P}(\mathcal{Y})$. MA combines the predictive distributions of $m$ models
$\{f_j: \mathcal{X} \to \mathcal{P}(\mathcal{Y})\}_{j=1}^m$ into a weighted ensemble, $p_\alpha(y \mid x) \coloneqq \sum_{j=1}^m \alpha_j f_j(y \mid x)$, with weights $\alpha_j >0$ (often constrained to sum to one). MA accounts for the possibility that multiple models can provide plausible explanations of the data.

In classical MA, the same weights $\alpha_1,\dots,\alpha_m$ are used for all inputs $x$. But in practice, different values of the input $x$ might call for different predictive models. This motivates \emph{adaptive averaging}, where the weights $\alpha_j$ depend on $x$:
\begin{equation}
    \alpha: \mathcal{X} \to \Delta^{m-1}, \qquad x \mapsto \alpha(x)=(\alpha_1(x), \dots, \alpha_m(x)).
\end{equation}
The result is an adaptive weighted prediction,
\begin{equation}
    p_\alpha(y \mid x) \coloneqq \sum_{j=1}^m \alpha_j(x) f_j(y \mid x).
\end{equation}
This model is also known as a \emph{mixture of experts} \citep{jacobs1991adaptive, jordan1994hierarchical}, where the adaptive weights $\alpha_j(x)$ are fit to maximize the predictive log likelihood of the data.

In this paper, we take a Bayesian perspective. We assume that the set of predictors $\mathcal{F} \coloneqq \{f_1, \dots, f_m\}$ is fixed, and model the selection of a predictor as a random process. Our model constructs a \textit{random selector} $g : \mathcal{X} \to \{e_1, \dots, e_m\}$ where $\{e_j\}_{j=1}^m$ denote $m$ indicator vectors, i.e., $g(x) = e_j$ selects predictor $f_{j}$.  Moreover, the prior on $g$ itself depends on the inputs $x$. Therefore, in our model, adaptivity arises not only from the variability of $g(x)$ across inputs, but also from allowing its prior to vary with $x$.

Under this model, MA is a natural consequence of the posterior predictive distribution. Consider a dataset $\D \coloneqq \{x_i, y_i\}_{i=1}^n$. The posterior predictive distribution for a new input $x$ is
\begin{align} \label{eq:predictive_exp}
    p(y\mid x,\D)=\sum_{j=1}^{m}f_{j}(y\mid x)\,p(g(x) = e_j \mid x,\D),
\end{align}
where $p(g(x) = e_j \mid x,\D)$ is a \emph{data dependent posterior} that incorporates both training inputs and labels. Eq.~\ref{eq:predictive_exp} is an ensemble of candidate models, with weights $\alpha_j(x)$ equal to the posterior over $g$:
\begin{equation}
    \alpha_j(x)=p(g(x) = e_j \mid x,\D).
\end{equation}
Unlike maximum likelihood approaches to MoE, this posterior captures the uncertainty over which predictor is most plausible for each input $x$.

Below, we first analyze the theoretical advantages of this adaptive Bayesian model averaging framework and derive finite-sample guarantees that compare its performance to that of any single predictor selected per input (Section~\ref{sec:advantages}).
We then develop input adaptive Bayesian Model Averaging (\methodname{}), by
(i) constructing an input adaptive prior, following \citet{slavutsky2025quantifying}, and
(ii) employing amortized variational inference to approximate the  posterior (Section~\ref{sec:method}).
We evaluate \methodname{} across regression and classification benchmarks (Section \ref{sec:experiments}), and show that \methodname{} achieves substantial gains in both accuracy and calibration compared to existing adaptive, and non-adaptive strategies.

\subsection{Related Work} \label{sec:related_work}

MA is regarded as the machine learning analogue of the ``Condorcet's jury" theorem \citep{mennis2006wisdom}, leveraging the ``wisdom of the crowd" to mitigate the inherent uncertainty in model selection. 
Thus, MA is often used when there are alternative, potentially overlapping hypotheses and no clear justification for selecting a single preferred model. Applications include ecological research \citep{wintle2003use, thuiller2004patterns, richards2005testing, dormann2008prediction, lauzeral2015iterative, zheng2024ensemble} and medicine \citep{jiang2021method, nanglia2022enhanced, mahajan2023ensemble}. More broadly, MA has been adopted in a wide range of machine learning tasks (e.g., \citet{fernandez2014we},  \citet{rokach2010pattern}).

As a form of model combination, MA is closely related to other ensemble techniques such as bagging \citep{breiman1996bagging} and boosting \citep{freund1995boosting}. It is a variant of stacking procedure \citep{wolpert1992stacked}, in which outputs of base learners are combined to produce the final prediction.

MA has been shown to reduce prediction errors beyond those of the best individual component model \citep{dormann2018model, peng2022improvability} and to mitigate overfitting \citep{dietterich2002ensemble, polikar2006ensemble}. In recent years, extensive surveys have reviewed MA \citep{kulkarni2013random, wozniak2014survey, gomes2017survey, gonzalez2020practical, sagi2018ensemble, wu2021ensemble}, with some focusing specifically on decision trees \citep{rokach2016decision} or neural networks \citep{ganaie2022ensemble}. 

A Bayesian method for MA was introduced by \citet{waterhouse1995bayesian}, who place a prior directly on the averaging weights. In contrast, we reinterpret MA as a problem of random model selection, leading to \emph{dynamic} model selection in which the choice of model adapts to the specific input.
Earlier work on dynamic model selection includes \citet{cao1995recognition, giacinto1999methods, gunes2003combination, didaci2005study, didaci2004dynamic}. However, these approaches focus on selecting a single model for each instance, rather than assigning instance-specific weights to average predictions across multiple models.

\paragraph{Input adaptive model averaging methods:} Few methods assign input-dependent weights. These date back to Mixture of Experts (MoE) \citep{jacobs1991adaptive}, where a gating network maps the input $x$ to weights $\alpha_j(x)$, estimated by maxmizing the induced likelihood.

\citet{rasmussen2001infinite} extended this framework by using Gaussian Processes (GPs) as base models, providing nonparametric flexibility. They adopt a Bayesian perspective with a Dirichlet Process (DP) prior, yielding an infinite mixture. However, here weights and base models are learned jointly, and thus only a single family of base predictors is considered.

Although these methods often outperform standard model averaging, maximum-likelihood–based assignment tends to concentrate probability mass on the predictor that is most confident about the observed outcome $y$, frequently resulting in overconfident predictions \citep{freund1997decision, guo2017calibration}.
Several approaches proposed alternative strategies for weight assignment.

\citet{woods1997combination} proposed a dynamic scheme based on local accuracy estimates. For a test input $x$, its neighborhood is identified (typically via $k$-nearest neighbors), and each classifier’s performance in this region is summarized as a local accuracy score. The classifier with the highest score is then selected to predict $x$. 

Similarly, \citet{chan2022synthetic} proposed an approach that assigns higher weight to models whose training domains better cover a test instance. Inputs are mapped into a learned low-dimensional space where models with similar predictions are closer together, and weights are set via kernel density estimation.  Unlike \citet{woods1997combination}, where similarity is predefined, here it is learned from data. Motivated by \citet{dror2017unsupervised}, the method assumes that models making random errors on an input are unlikely to agree. 

Perhaps most relevant to our work is Bayesian hierarchical stacking (BHS) \citep{yao2022bayesian}, which places priors on logit weights, and models them with hierarchical low-rank linear functions. The parameters are then estimated by maximizing the expected log predictive density.

Thus, prior work on adaptive model averaging has focused predominantly on methods targeting frequentist objectives, with relatively few Bayesian formulations.
In contrast to previous approaches, our model assumes a fully Bayesian setting in which the selector itself is random and, crucially, is defined locally relative to each input $x$. This yields an input-dependent prior $p(g \mid x)$ rather than a global prior $p(g)$. In turn, this prior induces an adaptive posterior that corresponds exactly to the Bayes-optimal weights, providing a principled approach for adaptive model averaging.

\section{Probabilistic formulation of adaptive model averaging} \label{sec:formulation}

We cast adaptive model averaging as a probabilistic model selection.  
To reflect that some models may be better suited for different inputs, we assume a probabilistic model in which the \emph{selection function} $g$ is treated as a random input-dependent variable.
For a training set $\D=\{(x_i, y_i)\}_{i=1}^n$, and a new input $x$, we assume the data generating process
\begin{align}
 & x_{i},x\overset{\mathrm{iid}}{\sim}p(x),\\
 & g\sim p(g\mid x,x_{1:n}),\\
 & y_{i}\sim p(y_{i}\mid x_{i},g),\;y\sim p(y\mid x,g).
\end{align}
We defer the precise specification of the adaptive prior $p(g\mid x,x_{1:n})$ to Section \ref{sec:adaptive_prior}. 

The predictive distribution for $y$ given a new input $x$ and the training data is then 
\begin{equation} \label{eq:predictive}
    p(y \mid x, \D) = \int p(y \mid x, \D, g)\, p(g \mid x, \D)\, d \mu(g) 
    = \int p(y \mid x, g)\, p(g \mid x, \D)\, d \mu(g),
\end{equation}
where  $p(g\mid x,\mathcal{D})$ is a posterior distribution on the space of functions\footnote{Formally, $p(g \mid x, \mathcal D)$ is a density w.r.t some reference measure $\mu$ on a space of measurable functions $\mathcal G$.} 
$\mathcal{G}\coloneqq \big\{g:\mathcal{X}\to\{e_1,\dots,e_m\}\big\}$. 

A draw from the posterior $g \sim p(g \mid x,\D)$ induces a random index $J(x)$, defined by the relation $g(x) = e_{j(x)}$. Using this index, we can rewrite \eqref{eq:predictive} as
\begin{equation} \label{eq:predictive_final}
p(y\mid x,\D) 
 =\int p(y \mid x, g)\, p(g \mid x,\D)\,d\mu(g)\\
  =\sum_{j=1}^{m}f_{j}(y\mid x)\,p(J(x) = j\mid x,\D).
\end{equation}
A formal proof of this equality is outlined in Appendix \ref{apx:change_of_measure}.

Under our model, the predictive distribution is a mixture of the candidate predictions $f_j(y \mid x)$ weighted by the posterior probabilities $p(J(x) =j \mid x, \mathcal{D})$. In other words, the input adaptive weights $\alpha_j(x)$ \emph{arise directly from the probabilistic formulation} itself, and \emph{they are precisely the posterior probabilities} of each model being the generator at input $x$.

A central difficulty, of course, is that the true posterior is unknown. 
In Section \ref{sec:method}, we introduce a variational approximation to $p(J(x) =j \mid \D_{i},x)$ that preserves explicit dependence on both $x$ and $\D$. 
Before presenting this approximation, we first analyze the performance guarantees that arise when the averaging weights are set to the true posterior probabilities $p(J(x) =j \mid x, \mathcal{D})$.

\subsection{Likelihood guarantees} \label{sec:advantages} 

So far we have seen that the posterior probabilities $p(J(x) =j \mid x, \mathcal{D})$ arise naturally as input adaptive weights under our model. In particular, they are the Bayes-optimal weights, as they recover the true predictive distribution.

We now show that this choice also comes with performance guarantees: the posterior-weights predictor not only reflects the correct probabilistic formulation, but in expectation achieves likelihood performance competitive with any input-specific single-model selector. The next theorem formalizes this result (for proof see Appendix \ref{supp:proof_best_mix}).

\begin{theorem} \label{thm:best_mix}
Denote $\D_{i} \coloneqq \{(x_t,y_t)\}_{t=1}^{i}$, and consider the posterior weights predictor $\hat{p}_\alpha^{(i)}$ assigning $\alpha_j(x; \D_{i})=p(J(x) =j \mid \D_{i},x)$ to the $j$-th predictor $f_j$.
Assume that $\E[\vert \log f_j(Y\mid X)\vert]<\infty$ for all $f_j\in \mathcal{F}$.
Then, for any measurable selector $j^*:\mathcal X\to\{1,\dots,m\}$ and any $n\ge 1$,
\begin{equation}
\frac{1}{n}\sum_{i=1}^n \E \left[\log \hat p_{\alpha}^{(i)}(y_i\mid x_i,\D_{i-1})\right]
\geq
\E\left[\log f_{j^*(x)}(y\mid x)\right]
+ 
\frac{1}{n}\sum_{i=1}^n \E \left[\log \alpha^{(i)}_{j^*(x_i)}(x_i)\right],
\end{equation}
where the expectations are taken w.r.t the population distribution $(x_i,y_i)\sim p(x,y)$.
\end{theorem}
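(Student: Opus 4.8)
The plan is to use the elementary fact that the mixture $\hat p_\alpha^{(i)}$ dominates each of its components, so that retaining only the term indexed by $j^*$ already yields the bound up to the log-weight penalty. Fix $i$ and write $\alpha^{(i)}_j(\cdot)$ for the weights used by $\hat p^{(i)}_\alpha$. Since the weights and the densities $f_j$ are nonnegative, for every realization of $(x_i,y_i)$ and $\D_{i-1}$,
\begin{equation}
\hat p_\alpha^{(i)}(y_i\mid x_i,\D_{i-1})=\sum_{j=1}^m \alpha^{(i)}_j(x_i)\,f_j(y_i\mid x_i)\ \ge\ \alpha^{(i)}_{j^*(x_i)}(x_i)\,f_{j^*(x_i)}(y_i\mid x_i).
\end{equation}
Taking logarithms (with the convention $\log 0=-\infty$, under which the bound is vacuous on the null set where the right side vanishes) gives the pointwise, almost-sure inequality $\log\hat p_\alpha^{(i)}(y_i\mid x_i,\D_{i-1})\ge \log\alpha^{(i)}_{j^*(x_i)}(x_i)+\log f_{j^*(x_i)}(y_i\mid x_i)$.

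I would then check that all expectations involved are well defined so that the inequality survives integration. Because $\alpha^{(i)}(x_i)\in\Delta^{m-1}$, the value $\hat p_\alpha^{(i)}(y_i\mid x_i,\D_{i-1})$ is a convex combination of $f_1(y_i\mid x_i),\dots,f_m(y_i\mid x_i)$ and hence is at most $\max_j f_j(y_i\mid x_i)$; therefore $(\log\hat p_\alpha^{(i)})^{+}\le\sum_j|\log f_j(y_i\mid x_i)|$, which is integrable by the hypothesis $\E[|\log f_j(Y\mid X)|]<\infty$, so $\E[\log\hat p_\alpha^{(i)}(y_i\mid x_i,\D_{i-1})]$ is well defined in $[-\infty,\infty)$. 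On the right side, $|\log f_{j^*(x_i)}(y_i\mid x_i)|\le\sum_j|\log f_j(y_i\mid x_i)|$ is integrable while $\log\alpha^{(i)}_{j^*(x_i)}(x_i)\le 0$, so their sum has a well-defined (possibly $-\infty$) expectation that splits as $\E[\log\alpha^{(i)}_{j^*(x_i)}(x_i)]+\E[\log f_{j^*(x_i)}(y_i\mid x_i)]$ with no $\infty-\infty$ ambiguity. Taking expectations of the pointwise inequality and invoking monotonicity then yields, for each $i$,
\begin{equation}
\E\!\left[\log\hat p_\alpha^{(i)}(y_i\mid x_i,\D_{i-1})\right]\ \ge\ \E\!\left[\log\alpha^{(i)}_{j^*(x_i)}(x_i)\right]+\E\!\left[\log f_{j^*(x_i)}(y_i\mid x_i)\right].
\end{equation}

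Finally, I would identify the last term with an $i$-independent population quantity: although $y_1,\dots,y_n$ are dependent through the shared selector $g$, each pair $(x_i,y_i)$ has marginal law $p(x,y)$ by exchangeability, and $j^*$ is a fixed measurable map that does not depend on the data, so $\E[\log f_{j^*(x_i)}(y_i\mid x_i)]=\E[\log f_{j^*(x)}(y\mid x)]$ for all $i$. Averaging the $n$ per-index inequalities and dividing by $n$ collapses this term to $\E[\log f_{j^*(x)}(y\mid x)]$ and produces exactly the stated bound. There is no real obstacle in the argument — it is essentially ``keep one mixture term, take logs, take expectations,'' the same device underlying prediction-with-expert-advice bounds — and the only points requiring care are the integrability bookkeeping in the second step (ensuring $\E[\log\hat p_\alpha^{(i)}]$ is not $+\infty$ and that the right-hand expectation splits legitimately) and the use of exchangeability, rather than full independence, to pass to the population value.
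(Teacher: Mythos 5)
Your proof is correct and follows essentially the same route as the paper's: keep the single mixture term indexed by $j^*(x_i)$, take logarithms, take expectations, and average over $i$. The only difference is that you spell out the integrability bookkeeping and the identification of $\E[\log f_{j^*(x_i)}(y_i\mid x_i)]$ with the population quantity more explicitly than the paper does, which is a welcome but inessential refinement.
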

Thus, the posterior weights predictor can match any per-input selector (i.e., a rule that may pick a different $j$ for different $x$), up to a term depending on the gating weights assigned to the chosen model at each $x$. 

Concretely, for the selector that picks the most probable model, 
$j^{(i)}(x) \in \argmax_{1 \leq j \leq m} \alpha^{(i)}_{j}(x)$,
the penalty becomes $\frac{1}{n}\sum_{i=1}^n \E\!\left[\log \max_{j}\alpha^{(i)}_{j}(x_i)\right]$, which vanishes as the posterior sharpens, i.e., when
$\max_{j}\alpha^{(i)}_j(x_i)\to 1$ (in probability or almost surely).

\section{\methodname{}: Input Adaptive Bayesian Model Averaging} \label{sec:method}

Our \emph{goal} is to develop a method for estimating this posterior distribution over models. By doing so, we obtain an averaging scheme that is consistent with both the training data $\D$ and the specific input $x$, thereby approximating the true predictive distribution that we ultimately aim to recover.

We begin by formulating the modeling assumptions for an adaptive prior that is conditioned jointly on the training covariates and a new input.
Building on this prior, we then develop a variational inference method to approximate the resulting posterior.

\subsection{Adaptive prior} \label{sec:adaptive_prior}
Based on the adaptive prior introduced in \citep{slavutsky2025quantifying}, we posit a prior that encodes the plausibility of each model conditional on both the training covariate $x_{1:n}$ and a new input $x$ at which prediction is sought. This prior is defined through an energy-based formulation.

Specifically, for a predictor $f_j$ we consider the prior induced by the negative energy function
\begin{align}
 &  E(J=j;x_{1:n},x)\coloneqq\int\sum_{i=1}^{n}\log p(y\vert x_{i},f_j)+\log p(y\vert x,f_j)\;dy\\
 &  p(J=j \vert x_{1:n},x)\coloneqq \frac{1}{Z(f)}  \exp \left(E(J=j;x_{1:n},x) \right),     \label{eq:prior} 
\end{align}
where the normalizing factor\footnote{This definition requires integrability of $\exp \left( E(J=j;x_{1:n},x) \right)$, and thus we assume that 
$\exp(E(J=j;x_{1:n},x))$ is integrable for each $j$.} is given by $Z(f)\coloneqq \sum_{j=1}^m \exp \left( E(J=j ;x_{1:n},x) \right)$.

This prior allows beliefs about model plausibility to adapt to the new input $x$. Unlike a prior defined solely from the training data, which remains fixed across prediction points, our formulation updates the relative weight of each model once $x$ is observed. This makes the prior \emph{input adaptive}, enabling model selection probabilities to shift dynamically with the prediction covariates. To build intuition, we next examine a simple analytical example.

\paragraph{A two-model Bernoulli example}

Suppose $y \in \{0,1\}$, and consider two candidate logistic models 
\begin{equation}
    p(y=1 \mid x, f_j) = \sigma(\beta_j x), \qquad \sigma(u) \coloneqq \frac{1}{1+e^{-u}},
\end{equation}
with $j\in {1,2}$ and slopes $0 < \beta_2 < \beta_1$.
In this setting, the energy function is given by 
\begin{align}
   E(J=j;x_{1:n},x) & =\sum_{i=1}^{n}\sum_{y\in\{0,1\}}\log p(y\mid x_{i},f_{j})+\sum_{y\in\{0,1\}}\log p(y\mid x,f_{j})\\
 & =\underbrace{\,\sum_{i=1}^{n}\log\left(\sigma(\beta_{j}x_{i})\left[1-\sigma(\beta_{j}x_{i})\right]\right)\quad}_{\eqqcolon C_{j}}+\underbrace{\phantom{\sum_{i=1}^{n}}\log\left(\sigma(\beta_{j}x)\left[1-\sigma(\beta_{j}x)\right]\right)\,}_{\eqqcolon\ell_{j}(x)} 
\end{align}
and the adaptive prior is 
\begin{equation}
    p(J=j \mid x_{1:n}, x) = \frac{\exp(C_j + \ell_j(x))}{\sum_{k=1}^m \exp(C_k + \ell_k(x))}
\end{equation}
%
Accordingly, the log-odds between the two models is 
\begin{equation}
    \log \frac{p(J=1 \mid x_{1:n}, x)}{p(J=2 \mid x_{1:n}, x)}
    = (C_1 - C_2) +  \ell_1(x) - \ell_2(x).
\end{equation}
Thus, the log-odds depend both on the difference between training baselines $C_1 - C_2$, and the change induced by conditioning also on the new input $x$ is $\delta_x \coloneqq \ell_1(x)-\ell_2(x)$.

Concretely, suppose the baseline difference is fixed at $C_1 - C_2 = \log 5 \approx 1.61$, yielding $p(J=1 \mid \D) = \sigma(\log 5) \approx 0.83$.
Based solely on the training data, the prior thus strongly favors $f_1$. Now consider a new input $x=1$ with $\beta_2=1$. As $\beta_1$ increases, the discrepancy $\lvert \ell_1(1) - \ell_2(1) \rvert$ grows, and $\delta_{x=1}$ shifts the likelihood ratio toward $f_2$.
For example, when $\beta_1=3$, the prior shifts to a mild preference for $f_1$, at $\beta_1=5$ it flips to favor $f_2$, and by $\beta_1=9$ the preference for $f_2$ becomes very strong. These dynamics, along with additional parameter settings for coefficients and baseline differences, are shown in Figure~\ref{fig:prior}.

This analysis highlights the interplay between the baseline preference and the input-specific adjustment introduced by $x$. It shows that, in extreme cases, even strong baseline beliefs can be overturned by the adaptive correction at the queried input.
\begin{figure}[ht]
\begin{center}
\includegraphics[width=0.9\columnwidth]{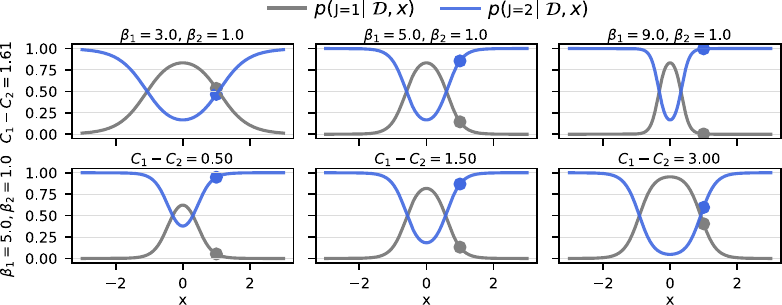}
\end{center}
\caption{Illustration of the input adaptive prior. Each panel shows the posterior probabilities $p(J=j \mid \D, x)$ as functions of $x$. \emph{Top}: the baseline log-odds is fixed and $\beta_1$ varies; larger $\beta_1$ values increase the influence of $x$, producing stronger adaptive corrections. \emph{Bottom}: $\beta_1, \beta_2$ are fixed while the baseline log-odds $C_1 - C_2$ varies; stronger baselines yield higher prior preference for $f_1$, but input-specific corrections can still substantially reshape the prior at certain $x$. The marked point ($x=1$) highlights how the adaptive prior shifts the relative model probabilities compared to the baseline.}
 \label{fig:prior}
\end{figure}

\paragraph{Evaluation of the prior:} 
Evaluating the proposed prior requires computing an integral over the outcome space $\mathcal{Y}$, and thus depends on whether the outcome space is discrete or continuous.
When $\mathcal{Y}$ is \emph{discrete} (e.g., in classification problems), the integral reduces to a finite sum over all possible outcome values. In this case, the evaluation is straightforward and can be computed exactly without approximation. 
When $\mathcal{Y}$ is \emph{continuous}, (e.g., in regression problems), the integral cannot typically be computed in closed form and may even diverge unless we restrict the domain of integration. Thus, to approximate the prior, as in \citep{slavutsky2025quantifying}, we employ Monte-Carlo integration where we sample $K$ possible outcome values uniformly from a predefined integration range $[y_{\min}, y_{\max}]$ and average the Normal log-likelihood (centered at the model's prediction with unit variance) over the $K$ samples.  

\subsection{Amortized Variational Posterior}

Equipped with the adaptive prior, we now turn to the estimation of the posterior $p(J=j \mid x_{1:n}, y_{1:n}, x) = p(J=j \mid \D, x)$, which conditions not only on the covariates $x$ and $x_{1:n}$, but also on the training labels $y_{1:n}$. 
This, in turn, will enable us to assign input adaptive weights for model averaging, bringing them closer to the ideal weights that recover the predictive distribution $p(y \mid x)$.

We do so by fitting variational distributions $q(f_j; x) \approx p(J=j \mid \D, x)$ parameterized as functions of the input $x$. This yields an \emph{amortized posterior approximation}, which allows us to efficiently evaluate approximate posteriors at multiple inputs $x$.

In our case, in the context of a new input $x$, the true posterior distribution over predictors is Multinomial $p(J=j  \mid \D, x) = \rho_j(x)$ for $j\in \{1,\dots, m\}$, where each $\rho_j(x)>0$ and $\sum_{j=1}^m \rho_j(x)=1$. 
Thus, we set the variational family to be the set of all multinomial distributions. 
\begin{equation}
    \mathcal{Q}_x \coloneqq \{q=(q(J=1;x),\dots, q(J=m; x)\in \Delta^{m-1}\}.
\end{equation}

For a given input $x$, our goal is to minimize the KL divergence
\begin{align} \label{eq:KL}
    \min_{q \in \mathcal{Q}_x} \KL ( q  \Vert \, p) 
    \coloneqq \sum_{j=1}^m q(J=j;x) \log \frac{q(J=j;x)}{p(J=j  \mid \D, x)}.
\end{align} 
Note that since the true posterior and the variational family share the same (categorical) form, the problem is well-specified: the KL depends only on estimating the probabilities $P(J=j;x)$.

\subsection{Optimization}
To minimize the KL divergence in Equation \ref{eq:KL}, we optimize the evidence lower bound (ELBO) on the log-likelihood~\citep{kingma2014, rezende2015, blei2017}.
We parameterize the variational distribution with a neural network with weights $\theta$, producing $h_\theta(x)=(q_\theta(J=1;x), \dots, q_\theta(J=m;x))$, and optimize $\theta$ rather than the output directly. 
Thus, our objective to fit the amortized posterior is
\begin{align} 
 \mathcal{L}\left(\theta;x\right) & =\E_{q_{\theta}}\left[\log p(y\mid x,f_{j})\right]-\KL(q_{\theta}\Vert\, p(J\vert x_{1:n},x))\\
 & =\sum_{j=1}^{m}\left[q_{\theta}(J=j;x)\log f_{j}(y\mid x)\right]-\sum_{j=1}^{m}q_{\theta}(J=j;x)\log\frac{q_{\theta}(J=j;x)}{ p(J=j\mid x_{1:n},x)}.
 \label{eq:loss}
\end{align}

Note that the expected log-likelihood $\E_{q_{\theta}}\left[\log p(y\mid x,f_{j})\right]$ reduces to a weighted sum, so no sampling is required to evaluate our objective. The complete optimization procedure is summarized in Algorithm~\ref{alg:iabma}.
\begin{algorithm}[t]
\caption{\methodname: Amortized Posterior Learning (IABMA)}
\label{alg:iabma}
\begin{algorithmic}[1]
\STATE \textbf{Inputs:} Training data $\D$; predictors $\{f_j\}_{j=1}^m$; initialization $\theta_0$; learning rate $\eta$; iterations $K$.
\STATE \textbf{Precompute:} For all $i=1,\dots, n$ and predictor $j=1,\dots, m$, store $\log f_j(y_i \mid x_i)$.
\vspace{0.25em}
\FOR{$k=1$ \TO $K$}
\FOR{$i=1$ \TO $n$}
\FOR{$j=1$ \TO $m$}
    \STATE \textbf{Prior:} Compute $p(J=j\vert x_{-i},x) \;\propto\; \exp\!\big(E(J=j;x_{-i},x_i)\big)$
    \STATE \textbf{Posterior:} Compute $h_{\theta_{k-1}}(x)=(q_{\theta_{k-1}}(J=1;x_i), \dots, q_{\theta_{k-1}}(J=m;x_i))$
    \STATE \textbf{ELBO:} Compute
    \vspace{-0.5em}
    \begin{equation*}
        \mathcal{L}(x_i; \theta_{k-1})
            \;=\; \sum_{j=1}^m q_{\theta_{k-1}}(J=j;x_i)\,\log f_j(y_i \mid x_i)
            \;-\; \sum_{j=1}^m q_{\theta_{k-1}}(J=j;x_i)\,\log\frac{q_{\theta_{k-1}}(J=j;x_i)}{p(J=j\vert x_{-i},x_i)}. \vspace{-0.5em}
    \end{equation*}
    \ENDFOR
    \STATE \textbf{Update:} $\overline{\mathcal{L}}(\theta_{k-1}) \leftarrow \frac{1}{n}\sum_i \mathcal{L}(x_i; \theta_{k-1})$
  \ENDFOR
  \STATE \textbf{Update:} $\theta_k \leftarrow \theta_{k-1} + \eta\, \nabla_\theta \overline{\mathcal{L}}(\theta_{k-1})$
\ENDFOR
\STATE \textbf{Return:} $\hat{\theta} \coloneqq \theta_K$
\end{algorithmic}
\end{algorithm}

\paragraph{Weight assignment:}
After training is complete (see Algorithm \ref{alg:iabma}), with the estimate $\hat{\theta}$,  for a new input $x$ we compute $(q_{\hat{\theta}}(J=1;x),\dots, q_{\hat{\theta}}(J=m; x))$ and assign $\alpha_j(x)=q_{\hat{\theta}}(J=j;x)$. 
This yields a predicted value $\hat{p}_{\alpha}(y\mid x)=\sum_{j=1}^{m}\alpha_{j}(x)f_{j}(y\mid x)$.

\section{Experiments} \label{sec:experiments}
We evaluate \methodname{} via 7 experiments on classification and regression tasks, based on simulated data, 2 case-studies, and 4 UCI benchmark datasets.

We compare predictive distributions against (a) non-adaptive baselines: (i) best single predictor, (ii) uniform average over predictors, (iii) accuracy-weighted average, and (iv) classical Bayesian model averaging (BMA); and (b) adaptive methods (see Section~\ref{sec:related_work}): (i) Mixture of Experts (MoE) \citep{jacobs1991adaptive}, (ii) Dynamic Local Accuracy (DLA) \citep{woods1997combination}, (iii) Synthetic Model Combination (SMC) \citep{rasmussen2001infinite}, and (iv) Bayesian Hierarchical Stacking (BHS) \citep{yao2022bayesian}.

In each experiment we train candidate predictors and  fit all model averaging methods on the training set. We test the performance of the resulting predictive distributions: for regression tasks we report $R^2$ and root mean square error (RMSE); for classification tasks we report accuracy and expected calibration error (ECE). 

Hyperparameters for our method and all baselines were tuned via binary search to maximize average performance (accuracy for classification, RMSE for regression) on a held-out repetition excluded from the analysis. The selected values and further implementation details are provided in Appendix~\ref{sup:implement}, with additional data processing and predictor specifications in Section~\ref{sup:experimental}. Code to reproduce all results is included with the submission and will be released publicly upon acceptance.

\subsection{Simulation}

We start with evaluating \methodname{} on a simple two-dimensional binary task:
half of the observations were drawn from a Gaussian cluster centered at $(-1,0)$, and labels were assigned by a linear rule $y=\mathbbm{1}_{x_1+x_2>-1}$. 
The remaining observations were sampled around $(1,0)$ on a ring; labels followed a circular rule $y=\mathbbm{1}_{r<1}$ with $r=\sqrt{x_1^2+x_2^2}$ measured from $(1,0)$. 
We generated $n_\text{train}=1,000$ and $n_\text{test}=500$ examples and used only the 2-dimensional coordinates as features (region indicators were recorded for analysis but not used for training). 

This example illustrates three subpopulations defined by their first dimension: (i) inputs that are perfectly separable by a linear boundary, (ii) inputs that are perfectly separable by a circular boundary, and (iii) inputs for which it is ambiguous whether they belong to the first or the second group.
Thus, and ideal weighting would put all weight on the best predictor for subpopulations (i) and (ii), and employ soft weighting at (iii).
All averaging methods operated on the same base classifiers: polynomial logistic regression (degrees 2 and 3), Linear Discriminant Analysis, and tailored ``soft-circle'' models that predict class probability via a logistic function of radial distance to a learned center. 
Additional details are provided in section \ref{sup:simulation}

\paragraph{Results:} Figure \ref{fig:simulation} shows that \methodname{} achieves highest accuracy and lowest ECE compared to all non-adaptive baselines, as well as all adaptive methods.
\begin{figure}[ht]
\begin{center}
\includegraphics[width=\columnwidth]{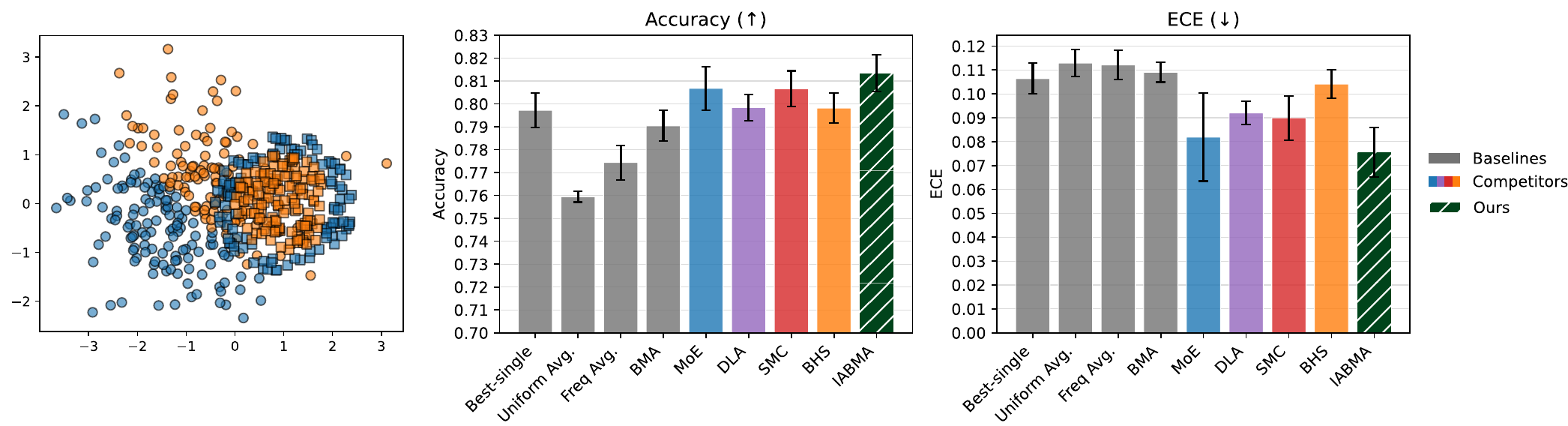}
\end{center}
\caption{Simulation. Left: data (of one repetition). Results for accuracy (middle) and ECE (right) are reported for 10 repetitions. \methodname{} achieves highest accuracy and lowest ECE.}
 \label{fig:simulation}
\end{figure}
\subsection{Case studies}
\subsubsection{Personalized cancer drug‐response}

An important example of heterogeneous data is personalized drug response prediction, where different models may perform better on different subpopulations. We evaluate \methodname{} on this task using the PRISM cancer drug response dataset.
The data consists of pairings of molecule-cell line RNA sequence features. For each drug–cell pair we form a continuous response $y$ so that larger values indicate greater sensitivity. We retain drugs with broad site coverage and construct inputs from the top variance genes. 
All averaging methods operate over the same four base regressors—Ridge, Histogram-based Gradient Boosting Tree, XGBoost, and a Multilayer perceptron (MLP), each with pre-processing tailored to model class.
Additional details are provided in \ref{sup:cancer}.

\paragraph{Results:} Figure \ref{fig:cases} shows that \methodname{} achieves higher $R^2$ and lower RMSE compared to all other methods. Further analysis is presented in Figures ~\ref{fig:cancer_ridge}–\ref{fig:cancer_mlp} which display the weights assigned by each averaging method for randomly selected inputs. The results show that \methodname{} consistently favored the best (or nearly best) model, whereas other methods leaned toward other predictors, with MoE in particular overemphasizing MLP and XGB even when suboptimal.

\subsubsection{Credit-card fraud detection}
Another domain characterized by heterogeneous data is fraud detection, where the rarity of fraudulent cases poses an additional challenge. We evaluate \methodname{} on this task using the IEEE-CIS Fraud Detection dataset.
The dataset consists from  mixed Continuous (such as transaction amount) and high-cardinality categorical features (such as product category), and the target variable $y\in\{0,1\}$ indicated where a transaction was fraud.
All averaging methods operate over the same base classifiers: Logistic Regression with Lasso penalty, Histogram-based Gradient Boosting Tree, XGBoost (with class-imbalance weighting), and an MLP. Additional details are provided in Section \ref{sup:fraud}.

\paragraph{Results:}  Figure \ref{fig:cases} shows that \methodname{} achieves higher accuracy and lower expected-calibration error compared to all other methods.
Since in fraud prediction calibration matters within each bin, we analyzed per-bin confidence $\lvert p - 0.5 \rvert$, and found that \methodname{} achieves the lowest error in all high-confidence bins ($>0.25$). The corresponding analysis is shown in Figure~\ref{fig:fraud_confidence}.

\begin{figure}[ht]
\begin{center}
\includegraphics[width=\columnwidth]{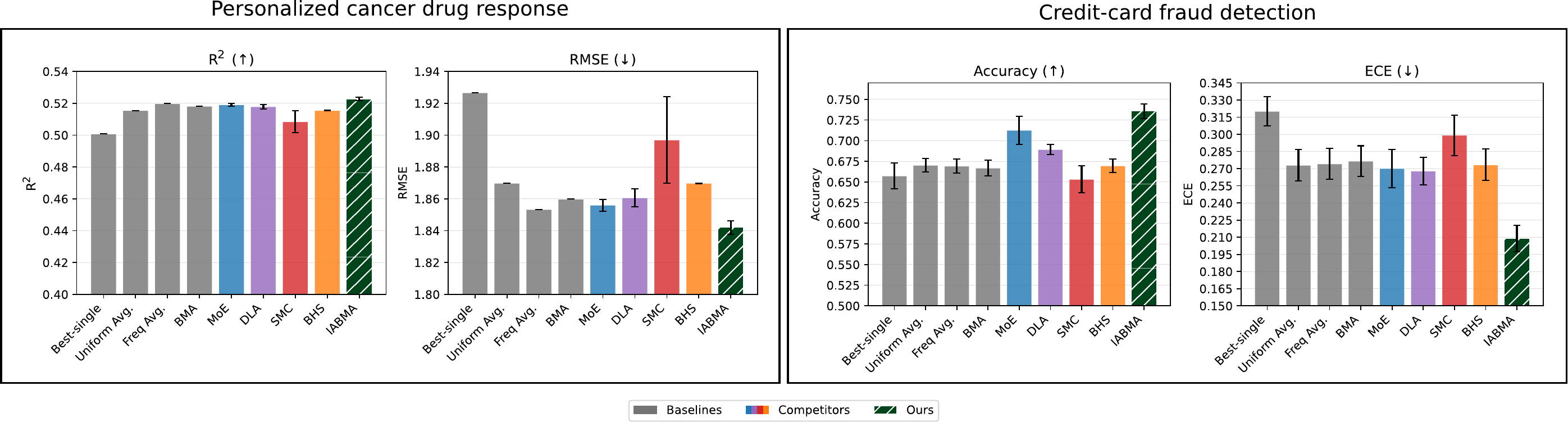}
\end{center}
\caption{Experimental results for main case studies. Results are reported for 10 repetitions. \methodname achieves best results compared to all other averaging method on both case studies.}
 \label{fig:cases}
\end{figure}

\subsection{Experiments on UCI benchmark datasets}

We evaluated \methodname{} on 4 UCI benchmark datasets spanning both classification and regression. 
For \emph{classification} we used \texttt{spambase} and \texttt{credit-g}; 
for \emph{regression} we used \texttt{bike-sharing} and \texttt{california-housing}. 
All pre-processing information is detailed in Section \ref{sup:uci}

Across all datasets we trained a common set of base learners. For classification: Multinomial Naive Bayes, $k$–NN ($k=3$), Random Forest, Extra Trees, and a linear SVM. For regression: Ridge ($\alpha{=}0.05$), Lasso ($\alpha{=}0.05$), $k$–NN ($k{=}3$, distance-weighted), Random Forest, and Extra-Trees. To encourage diversity, each model was trained on a subset of features (“feature bundles”). Full pre-processing steps, feature bundles, and model specifications are detailed in Section~\ref{sup:uci}.

\textbf{Results:} Unlike complex heterogeneous datasets, UCI benchmarks are highly normalized and thus show less variability across methods. Even so, \methodname{} outperforms all competitors in all four experiments on at least one metric, and in two cases on both. Results are reported in Table~\ref{tab:uci_bench_mean_sd}.

\section{Conclusion}
We introduced \methodname{}, a framework that casts model averaging as probabilistic model selection conditioned on the input. Within this formulation, the posterior distribution over models provides the natural, Bayes-optimal choice of input adaptive weights, thereby recovering the true predictive distribution. Our approach is grounded in an input-dependent prior on the selector function and implemented through amortized variational inference of the posterior.

We establish finite-sample bounds showing that the posterior-weights predictor achieves strong likelihood performance compared to any input-specific single-model selector. 
Empirically, we evaluate \methodname{} across regression and classification tasks, including personalized cancer treatment response, credit-card fraud detection, and standard UCI benchmarks. We show that \methodname{} consistently outperforms both non-adaptive baselines and existing adaptive methods, delivering more accurate and better calibrated predictions. 


\clearpage
\bibliography{references}
\bibliographystyle{iclr2026_conference}

\clearpage

\appendix
\section{Proofs}

\subsection{Change of measure argument} \label{apx:change_of_measure}
Let $F: X_{2} \to X_{1}$ be a measurable function between two measure spaces $(X_{1}, \mathcal{A}_{1}, \eta)$ and $(X_{2}, \mathcal{A}_{2}, \nu)$. Let $g: X_{1} \to \mathbb{R}$ measurable function. Recall that the change of variables formula is given by 
\begin{align}
    \int_{X_{1}} g\; dF_{\#}\eta  &= \int_{X_{1}} (g \; \circ F) \; d\eta.
\end{align}
where $F_{\#}\eta$ denotes the push-forward of $\eta$ through $F$.

\medskip 
Applying this to our setting, recall that a draw from the posterior 
$g \sim p(g \mid x,\D)$ induces a random index $j(x)$ defined by 
the relation $g(x) = e_{j(x)}$. 
Formally, the evaluation map 
\[
s_x : \mathcal{G} \to \{1,\dots,m\}, 
\qquad s_x(g) = j(x),
\]
pushes the posterior measure $p(g \mid x, \D)$ forward onto a distribution over indices. 
Using this push-forward, we can rewrite \eqref{eq:predictive} as
\begin{align}
    \int_{\mathcal{G}} p(y \mid x, g)\, d\,\mathbb{P}(g \mid x,\D) 
    &= \int_{\mathcal{G}} f_{s_x(g)}(y \mid x)\, d\,\mathbb{P}(g \mid x,\D) \\
    &= \int_{\{1,\dots,m\}} f_j(y \mid x)\, d\,(E_{x\#}\mathbb{P})(j \mid x,\D) \\
    &= \sum_{j=1}^m f_j(y \mid x)\, p(j \mid x,\D).
\end{align}

\subsection{Proof of Theorem \ref{thm:best_mix}} \label{supp:proof_best_mix}

\begin{theorem*} 
Denote $\D_{i} \coloneqq \{(x_t,y_t)\}_{t=1}^{i}$, and consider the posterior weights predictor $\hat{p}_\alpha^{(i)}$ assigning $\alpha_{j}^{(i)}(x) \coloneqq p(J(x)= j \mid \D_{i},x)$ to the $j$-th predictor $f_j$.
Assume that $\E[\vert \log f_j(Y\mid X)\vert]<\infty$ for all $f_j\in \mathcal{F}$.
Then, for any measurable selector $j^*:\mathcal X\to\{1,\dots,m\}$ and any $n\ge 1$,
\begin{equation}
\frac{1}{n}\sum_{i=1}^n \E \left[\log \hat p_{\alpha}^{(i)}(y_i\mid x_i,\D_{i-1})\right]
\geq
\E\left[\log f_{j^*(x)}(y\mid x)\right]
+ 
\frac{1}{n}\sum_{i=1}^n \E \left[\log \alpha^{(i)}_{j^*(x_i)}(x_i)\right],
\end{equation}
where the expectations are taken w.r.t the population distribution $(x,y)\sim p(x,y)$.
\end{theorem*}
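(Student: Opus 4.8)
## Proof Proposal

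The plan is to exploit the fact that, for each $i$, the posterior-weights predictor $\hat p_\alpha^{(i)}(y_i \mid x_i, \D_{i-1}) = \sum_{j=1}^m \alpha_j^{(i)}(x_i) f_j(y_i \mid x_i)$ is a convex combination of the candidate likelihoods. The key inequality is that a convex combination dominates any single weighted term: for any fixed index $k$,
\begin{equation}
\sum_{j=1}^m \alpha_j^{(i)}(x_i)\, f_j(y_i \mid x_i) \;\geq\; \alpha_k^{(i)}(x_i)\, f_k(y_i \mid x_i),
\end{equation}
since all summands are nonnegative. Taking $k = j^*(x_i)$ and applying the logarithm (monotone), we get the pointwise bound
\begin{equation}
\log \hat p_\alpha^{(i)}(y_i \mid x_i, \D_{i-1}) \;\geq\; \log f_{j^*(x_i)}(y_i \mid x_i) \;+\; \log \alpha^{(i)}_{j^*(x_i)}(x_i).
\end{equation}
This is the heart of the argument; everything else is bookkeeping.

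Next I would take expectations on both sides with respect to the population distribution. The integrability hypothesis $\E[|\log f_j(Y \mid X)|] < \infty$ guarantees that $\E[\log f_{j^*(x)}(y \mid x)]$ is well-defined and finite (note $j^*$ is measurable, so $\log f_{j^*(x)}(y\mid x)$ is a measurable selection among finitely many integrable functions, hence integrable). The term $\E[\log \alpha^{(i)}_{j^*(x_i)}(x_i)]$ is nonpositive since $\alpha^{(i)}_{j^*(x_i)}(x_i) \in (0,1]$, so it is either finite or $-\infty$; in the latter case the claimed inequality holds trivially, and otherwise the expectation of the sum splits additively. A subtle point: the expectation $\E[\log \hat p_\alpha^{(i)}(y_i\mid x_i, \D_{i-1})]$ is over the joint law of $(x_i, y_i)$ together with $\D_{i-1}$, while $\E[\log f_{j^*(x)}(y\mid x)]$ only involves one fresh pair $(x,y)$; since all pairs are i.i.d.\ and $\alpha^{(i)}$ depends only on $\D_{i-1}$ and the new point, the marginal of $(x_i,y_i)$ is still $p(x,y)$, so the first term on the right is the same constant for every $i$. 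Finally I would average the per-$i$ inequalities over $i = 1, \dots, n$; the constant term $\E[\log f_{j^*(x)}(y\mid x)]$ passes through the average unchanged, yielding exactly the stated bound.

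I do not anticipate a serious obstacle here — the result is essentially a one-line convexity bound plus linearity of expectation. The only points requiring care are (i) justifying that the expectations are well-defined and that the additive split of $\E[\cdot + \cdot]$ is legitimate, which is handled by the sign of $\log \alpha$ and the integrability assumption on the $\log f_j$, and (ii) being precise about which probability space each expectation lives on, so that the ``fresh pair'' term is genuinely $i$-independent. If one wanted the tightest possible statement one could note the bound actually holds pointwise before any expectation, so measurability of $x \mapsto \alpha^{(i)}_{j^*(x)}(x)$ (inherited from measurability of $j^*$ and of the posterior map) is all that is needed to integrate.
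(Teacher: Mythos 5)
Your proposal is correct and follows essentially the same route as the paper's proof: bound the mixture below by the single term $\alpha^{(i)}_{j^*(x_i)}(x_i)f_{j^*(x_i)}(y_i\mid x_i)$, take logs and expectations, and average over $i$ (the paper merely routes the expectation through conditioning on $\D_{i-1}$ and the tower property, while you also spell out the integrability bookkeeping). No gaps.
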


\begin{proof}
Define the posterior-weights predictor 
\begin{align}
 & \hat{p}_{\alpha}^{(i)}(y\mid x,\D_{i-1})=\sum_{j=1}^{m}\alpha_{j}^{(i)}(x)f_{j}(y\mid x)
\end{align}
For a fixed input $x_{i}$ and a fixed predictor $f_{k}$ we have that
\begin{align}
\log\hat{p}_{\alpha}^{(i)}(y_{i}\mid x_{i},\D_{i-1}) & =\log\left(\sum_{j=1}^{m}\alpha_{j}^{(i)}(x_{i})f_{j}(y_{i}\mid x_{i})\right)\\
 & \geq\log\left(\alpha_{k}^{(i)}(x_{i})f_{k}(y_{i}\mid x_{i})\right)\\
 & =\log f_{k}(y_{i}\mid x_{i})+\log\alpha_{k}^{(i)}(x_{i}).
\end{align}
Taking $\E_{(x,y)\sim p(x,y)}\left[\cdot\mid x_{i},\D_{i-1}\right]$, since $f_{j^{*}(x)}(y_{i}\mid x_{i})$ is independent of $\D_{i-1}$,
\begin{equation}
    \E\left[\log\hat{p}_{\alpha}^{(i)}(y_{i}\mid x_{i},\D_{i-1})\right]\geq\E\left[\log f_{k}(y_{i}\mid x_{i})\right]+\E\left[\log\alpha_{k}^{(i)}(x_{i})\mid\D_{i-1}\right].
\end{equation}
This holds for any $1\leq k\leq m$, hence for $k=j^{*}(x_{i})$, 
\begin{equation}
    \E\left[\log\hat{p}_{\alpha}^{(i)}(y_{i}\mid x_{i},\D_{i-1}) \mid \D_{i-1}\right]\geq\E\left[\log f_{j^{*}(x)}(y_{i}\mid x_{i})\right]+\E\left[\log\alpha_{j^{*}(x_{i})}^{(i)}(x_{i})\mid\D_{i-1}\right].
\end{equation}
Taking $\E\left[\cdot  \mid \D_{i-1}\right]$, by the law of total expectation,
\begin{align}
\E\left[\log\hat{p}_{\alpha}^{(i)}(y_{i}\mid x_{i},\D_{i-1}) \right] & \geq\E\left[\log f_{j^{*}(x_{i})}(y_{i}\mid x_{i})\right]+\E\left[\log\alpha_{j^{*}(x_{i})}^{(i)}(x_{i})\right].
\end{align}
Averaging over $i$, we get 
\begin{align*}
\frac{1}{n}\sum_{i=1}^{n} \E\left[\log\hat{p}_{\alpha}^{(i)}(y_{i}\mid x_{i},\D_{i-1})\right] & \geq\E\left[\log f_{j^{*}(x_{i})}(y_{i}\mid x_{i})\right]+\frac{1}{n}\sum_{i=1}^{n}\E\left[\log\alpha_{j^{*}(x_{i})}^{(i)}(x_{i})\right].
\end{align*}
\end{proof}

\section{Additional experimental results} \label{sup:results}
In what follows we provide a deeper analysis of the performance of adaptive model averaging methods on the two case-studies.

\subsection{Cancer treatment response}
To illustrate how different methods allocate weights, we sampled 16 cases as follows: for each classifier $f_j$, we randomly selected four examples from those where \methodname{} assigned the highest weight to $f_j$. Figures~\ref{fig:cancer_ridge}–\ref{fig:cancer_mlp} display the weights assigned by each averaging method for Ridge, XGB, HGB, and MLP. For each case, we also report the RMSE achieved by the individual classifiers. This analysis shows that in all cases, \methodname{} places the largest weight on the model with either the lowest error or a near-tied second. By contrast, competing methods tend to favor other predictors. In particular, MoE consistently prioritizes MLP or XGB, even in instances where these models are locally suboptimal. 

\begin{figure}[H]
\begin{center}
\includegraphics[width=\columnwidth]{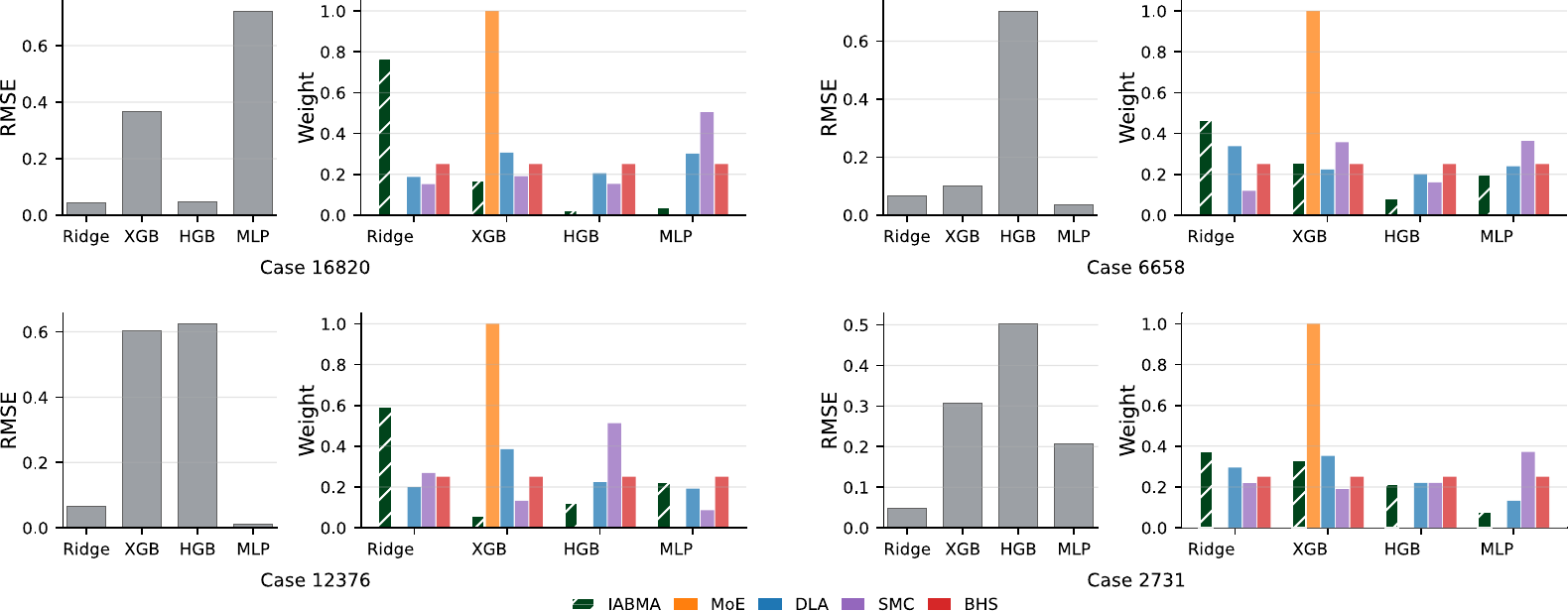}
\end{center}
\caption{Cases where \methodname{} assigns the highest weight to Ridge.}
\label{fig:cancer_ridge}
\end{figure}

\begin{figure}[H] 
\begin{center}
\includegraphics[width=\columnwidth]{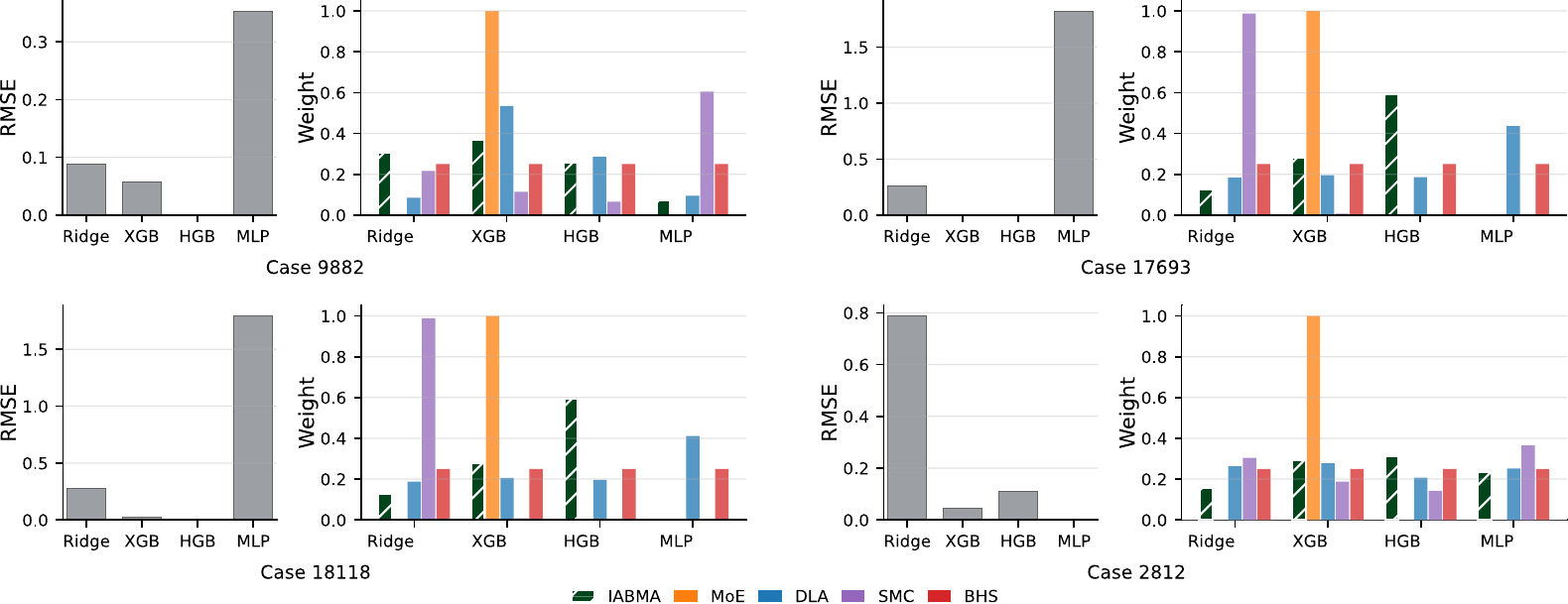}
\end{center}
\caption{Cases where \methodname{} assigns the highest weight to XGB.}
\label{fig:cancer_xgb}
\end{figure}

\begin{figure}[H] 
\begin{center}
\includegraphics[width=\columnwidth]{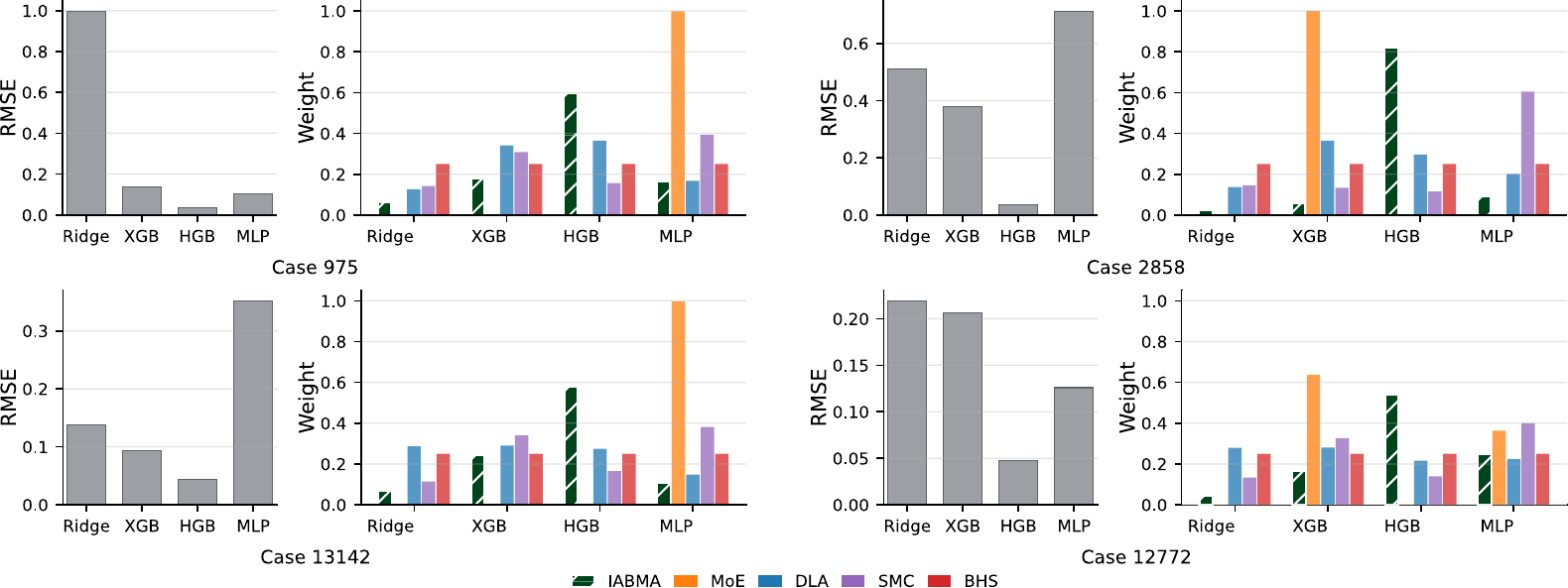}
\end{center}
\caption{Cases where \methodname{} assigns the highest weight to HGB.}
\label{fig:cancer_hgb}
\end{figure}

\begin{figure}[H] 
\begin{center}
\includegraphics[width=\columnwidth]{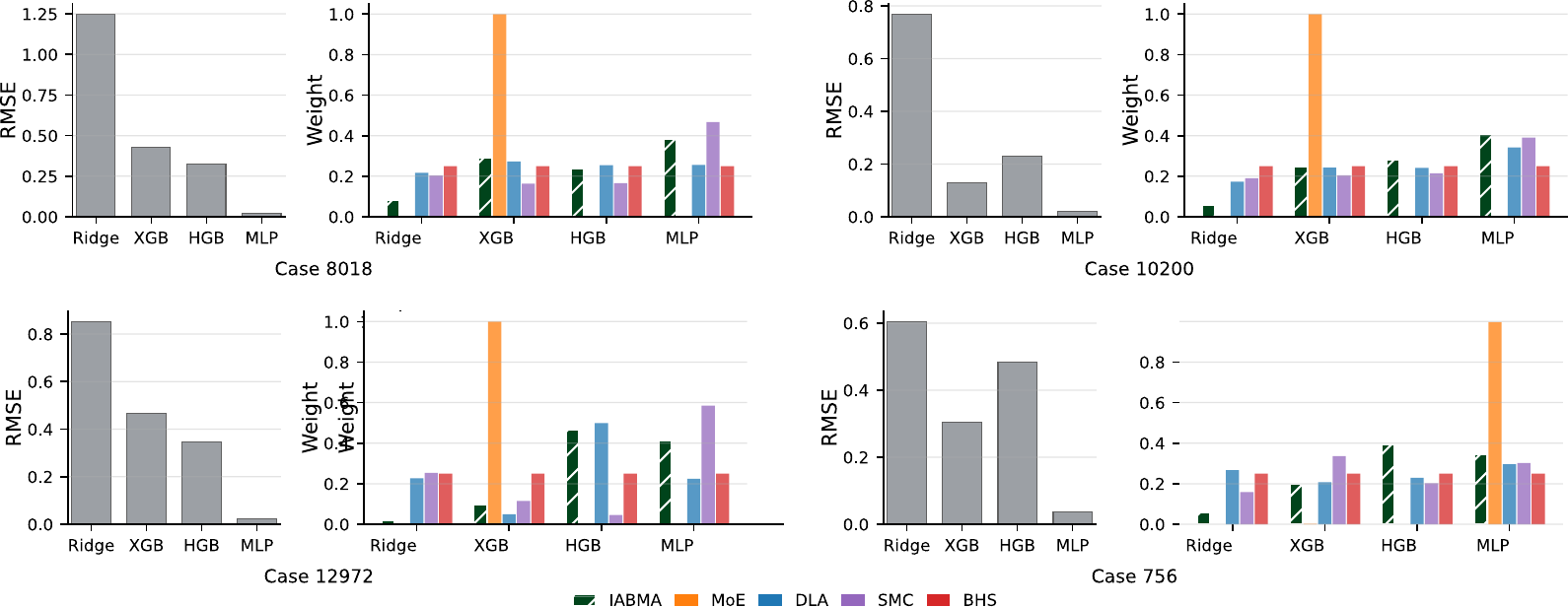}
\end{center}
\caption{Cases where \methodname{} assigns the highest weight to MLP.}
\label{fig:cancer_mlp}
\end{figure}

\clearpage

\subsection{Credit card fraud}

Credit card fraud prediction is a highly sensitive area, with risks of false alarms and misreporting, calibration is crucial not only overall but also within each bin. To this end, we analyzed the confidence measure $\lvert p - 0.5 \rvert$ where p is the estimated probability, which captures certainty for both positive and negative events, and compared the bin-wise errors across averaging methods. Figure~\ref{fig:fraud_confidence} shows that in all high-confidence bins (confidence $>0.25$), \methodname{} attains the lowest error, showing that most miscalculated predictions occur in low confidence instances.

\begin{figure}[H] \label{fig:fraud_confidence}
\begin{center}
\includegraphics[width=\columnwidth]{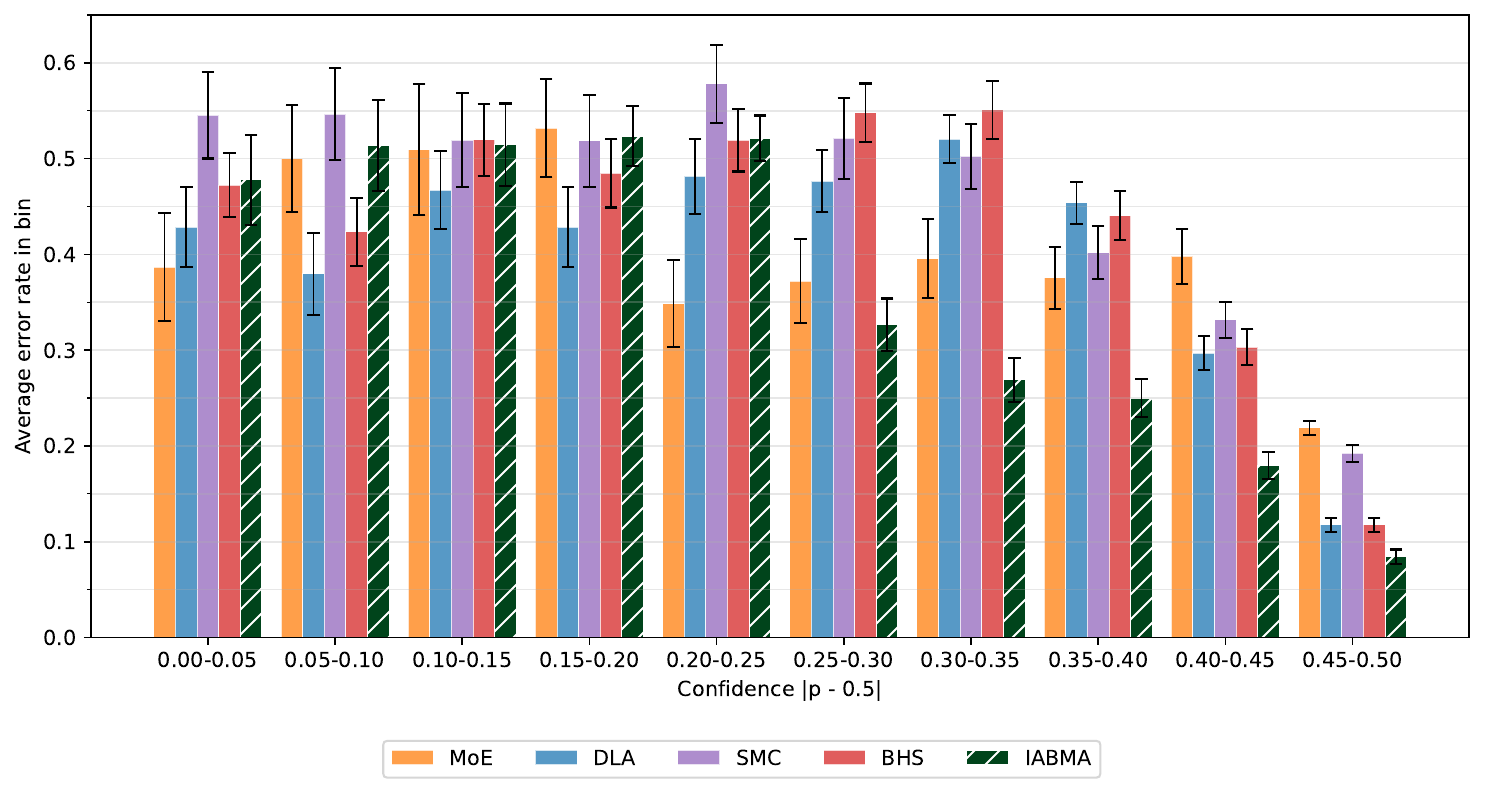}
\end{center}
\caption{Calibration across confidence bins in credit-card fraud prediction}
\end{figure}

\subsection{UCI benchmark datasets}

Results are reported in Table \ref{tab:uci_bench_mean_sd}.

\begin{table}[t]
\caption{UCI benchmarks: mean (sd) across runs.}
\label{tab:uci_bench_mean_sd}
\begin{center}
{\small
\renewcommand{\arraystretch}{1.15}
\begin{tabular}{llccccccccc}
\multicolumn{1}{c}{\bf Dataset} & \multicolumn{1}{c}{\bf Metric} & \shortstack{\textbf{Best}\\ \textbf{single}} & \shortstack{\textbf{Uniform}\\ \textbf{Avg.}} & \shortstack{\textbf{Freq}\\ \textbf{Avg.}} & \textbf{BMA} & \textbf{MoE} & \textbf{DLA} & \textbf{SMC} & \textbf{BHS} & \textbf{IABMA} \\ \hline\noalign{\vskip 2pt}
Bike-sharing & R2 (↑) & \shortstack{0.706\\ (0.022)} & \shortstack{0.752\\ (0.010)} & \shortstack{0.773\\ (0.012)} & \shortstack{0.774\\ (0.014)} & \shortstack{0.706\\ (0.022)} & \shortstack{0.781\\ (0.013)} & \shortstack{0.756\\ (0.010)} & \shortstack{0.752\\ (0.010)} & \shortstack{\textbf{0.794}\\ (0.010)} \\
Bike-sharing & RMSE (↓) & \shortstack{0.582\\ (0.033)} & \shortstack{0.491\\ (0.021)} & \shortstack{0.448\\ (0.020)} & \shortstack{0.447\\ (0.021)} & \shortstack{0.581\\ (0.033)} & \shortstack{0.446\\ (0.020)} & \shortstack{0.483\\ (0.020)} & \shortstack{0.491\\ (0.021)} & \shortstack{\textbf{0.433}\\ (0.018)} \\
\hline\noalign{\vskip 2pt}
Cal.-housing & R2 (↑) & \shortstack{0.772\\ (0.022)} & \shortstack{0.840\\ (0.018)} & \shortstack{0.840\\ (0.017)} & \shortstack{0.812\\ (0.017)} & \shortstack{0.778\\ (0.024)} & \shortstack{0.840\\ (0.018)} & \shortstack{0.817\\ (0.066)} & \shortstack{0.840\\ (0.018)} & \shortstack{\textbf{0.844}\\ (0.014)} \\
Cal.-housing & RMSE (↓) & \shortstack{0.036\\ (0.004)} & \shortstack{0.025\\ (0.003)} & \shortstack{0.025\\ (0.003)} & \shortstack{0.029\\ (0.003)} & \shortstack{0.035\\ (0.004)} & \shortstack{0.025\\ (0.003)} & \shortstack{0.029\\ (0.010)} & \shortstack{0.025\\ (0.003)} & \shortstack{\textbf{0.024}\\ (0.003)} \\
\hline\noalign{\vskip 2pt}
Credit-g & Accuracy (↑) & \shortstack{0.634\\ (0.036)} & \shortstack{0.676\\ (0.029)} & \shortstack{0.662\\ (0.038)} & \shortstack{0.648\\ (0.036)} & \shortstack{0.624\\ (0.036)} & \shortstack{0.668\\ (0.039)} & \shortstack{0.626\\ (0.046)} & \shortstack{0.682\\ (0.039)} & \shortstack{\textbf{0.684}\\ (0.047)} \\
Credit-g & ECE (↓) & \shortstack{0.260\\ (0.034)} & \shortstack{0.172\\ (0.022)} & \shortstack{\textbf{0.169}\\ (0.020)} & \shortstack{0.174\\ (0.023)} & \shortstack{0.296\\ (0.035)} & \shortstack{0.173\\ (0.025)} & \shortstack{0.222\\ (0.038)} & \shortstack{0.176\\ (0.025)} & \shortstack{0.175\\ (0.020)} \\
\hline\noalign{\vskip 2pt}
Spambase & Accuracy (↑) & \shortstack{0.699\\ (0.110)} & \shortstack{0.702\\ (0.094)} & \shortstack{0.738\\ (0.044)} & \shortstack{0.760\\ (0.024)} & \shortstack{0.760\\ (0.035)} & \shortstack{0.729\\ (0.052)} & \shortstack{0.757\\ (0.032)} & \shortstack{0.646\\ (0.132)} & \shortstack{\textbf{0.764}\\ (0.032)} \\
Spambase & ECE (↓) & \shortstack{0.114\\ (0.022)} & \shortstack{0.163\\ (0.049)} & \shortstack{0.148\\ (0.042)} & \shortstack{0.169\\ (0.018)} & \shortstack{\textbf{0.095}\\ (0.034)} & \shortstack{0.171\\ (0.051)} & \shortstack{0.222\\ (0.023)} & \shortstack{0.180\\ (0.061)} & \shortstack{0.146\\ (0.025)} \\
\hline\noalign{\vskip 2pt}
\end{tabular}
}
\end{center}
\end{table}

\section{Experimental details} \label{sup:experimental}

\subsection{Simulation} \label{sup:simulation}
\subsubsection{Data and processing}

We generated a two-dimensional binary dataset with two subpopulations governed by different decision rules. 
For the \emph{linear} subpopulation, we drew $n_{\text{lin}}=n_{\text{train}}/2$ training points from a Gaussian cloud centered at $(-t,0)$ (with $t=1$), 
\[
X^{(\text{lin,train})}\sim \mathcal{N}\bigl(({-}t,0),\,0.1\,I_2\bigr),
\]
and assigned labels by a linear rule $y=\mathbbm{1}\{x_1+x_2>-t\}$. 
For the \emph{circular} subpopulation, we drew $n_{\text{circ}}=n_{\text{train}}-n_{\text{lin}}$ points on a ring around $(t,0)$ by sampling 
$\theta\sim\text{Unif}(0,2\pi)$ and $r=\sqrt{U}$ with $U\sim\text{Unif}(0,2)$, and set
\[
X^{(\text{circ})}=(t,0)+\bigl(r\cos\theta,\,r\sin\theta\bigr),\qquad 
y=\mathbbm{1}\{r<1\}.
\]
We used $n_{\text{train}}=1,000$ and $n_{\text{test}}=500$; the train/test splits were generated independently. 

Only the two coordinates $(x_1,x_2)$ were provided as features. 
A region indicator $z\in\{0\ (\text{linear}),\,1\ (\text{circular})\}$ was recorded for analysis but was not used during training. 

\subsubsection{Candidate predictors}

All averaging methods were evaluated on the same 3 base classifiers:

\begin{enumerate}
\item \emph{Polynomial logistic regression (degrees 2 and 3).} 
We fit logistic regression with polynomial features of degree $d\in\{2,3\}$ (no bias term in the expansion).

\item \emph{Linear Discriminant Analysis (LDA).} 
A linear generative classifier fit on the raw coordinates, providing a single linear boundary.

\item \emph{Soft-circle classifiers (two instances).} 
Each instance modeled the positive-class probability as a logistic function of radial distance to a fixed center,
\[
p_\text{circle}(y{=}1\mid x)\;=\;\sigma\bigl(\gamma\,(R-\|x-c\|)\bigr),\quad 
c=(0.8\,t,0),\; R=1.0,\; \gamma=5.0,
\]
yielding smooth circular decision regions around $(t,0)$. 

We instantiated two copies of each model to allow the averaging procedure to allocate weight across similar experts.
\end{enumerate}

\subsection{PRISM cancer experiment} \label{sup:cancer}
\subsubsection{Data and processing}

We used the publicly available PRISM cancer drug response dataset.  
The primary data\footnote{Repurposing\_Public\_23Q2\_Extended\_Primary\_Data\_Matrix.csv} was combined with an RNA-seq expression matrix\footnote{OmicsExpressionProteinCodingGenesTPMLogp1.csv}, cell-line metadata\footnote{Cell\_lines\_annotations\_20181226.txt}, and tissue labels\footnote{Model.csv}.  
All files are available from \href{https://depmap.org/portal/data_page/}{https://depmap.org/portal/data\_page/}.  

The PRISM file reports drug–cell line responses with identifiers of the form ACH-\#.  
We normalized all identifiers to the canonical zero-padded format (ACH-XXXXXX).  
Non-Continuous entries and all observations lacking a primary cancer site were excluded.  
Responses correspond to log-fold changes (LFC), clipped to the range $[-6, 6]$, and the prediction target was defined as $y = -v$, where $v$ is the clipped LFC. 

We focused on the $40$ drugs with the greatest site-level heterogeneity.  
Specifically, we computed the between-site variance of $y$ and retained compounds observed in at least $3$ distinct sites, with at least $5$ samples per site and at least $40$ samples overall.  
A minimum per-site coverage threshold of $20$ samples was enforced.  
To avoid domination by a few large tissues, we capped each site at $1.1 \times s$, where $s$ is its sample size.  
This yielded approximately $18{,}460$ drug–cell line pairs (slight variation across random splits), of which $80\%$ were used for training and $20\%$ for testing.  

Gene expression features were restricted to the $100$ highest-variance genes.
Each gene was standardized to mean $0$ and variance $1$ based on training statistics.
The final feature matrix consisted of standardized gene expression values and a categorical compound indicator.  

The full processing code was submitted with this paper and will be released publicly upon acceptance.

\subsubsection{Candidate predictors}

All averaging methods were evaluated on averaging the same four regression models with reprocessing pipelines tailored per model:  

\begin{enumerate}
    \item \emph{Ridge regression ($\ell_2$ regularized linear model)}.  
    Gene features were imputed (median), standardized to zero mean and unit variance, and combined with a dense one-hot encoding of the compound identity.  

    \item \emph{Histogram-based Gradient Boosting regressor (HGB)}.  
    Tree-based model trained on raw gene values (median imputation only) together with a sparse one-hot encoding of the compound identity.  

    \item \emph{XGBoost regressor (XGB)}.  
    Gradient-boosted decision trees with squared-error objective, trained using the same pre-processing as HGB.  
    We used 400 estimators, learning rate $0.05$, maximum depth $8$, subsample ratio $0.9$, and column subsample ratio $0.8$, with $\ell_1$ and $\ell_2$ regularization.  

    \item \emph{Multi-layer perceptron (MLP)}.  
    A feed-forward neural network with hidden layers of size (128, 64), ReLU activations, learning rate $10^{-3}$, batch size 64, and early stopping based on a 10\% validation split.  
    Inputs were preprocessed as for Ridge (dense, imputed, standardized gene features and dense one-hot drug encoding).  
\end{enumerate}

\subsection{IEEE-CIS fraud experiment} \label{sup:fraud}
\subsubsection{Data and processing}

We used the IEEE-CIS credit-card fraud dataset, available at \href{https://www.kaggle.com/c/ieee-fraud-detection/data}{https://www.kaggle.com/c/ieee-fraud-detection/data}. 

We removed rows with missing target (\texttt{isFraud}) and features with more than \(50\%\) missing values. 
To limit explosion in feature dimension, infrequent categories were grouped into a shared rare category.

In each repetition $80\%$ of the data was used for training and $20\%$ for testing.  The training data was then reduced to obtained class balance, while in test data class imbalance was maintained. To reduced covariate shift in the train-test split we stratified jointly on (\texttt{ProductCD}, \texttt{card4}) crossed with per-row missingness bins and \texttt{TransactionAmt} quantile bins, with a fallback “RARE’’ bucket for very small strata. 
This procedure yielded a stable empirical mix of products, card networks, and spending levels.
Specifically, to control the empirical mix of products, card networks, and spending levels we stratified jointly on (\texttt{ProductCD}, \texttt{card4}) crossed with per-row missingness bins and \texttt{TransactionAmt} quantile bins.

Continuous features were median-imputed and where appropriate, standardized to zero mean and unit variance. Categorical features were imputed to the most frequent level and one-hot encoded, with infrequent categories pooled into a rare-level. Class imbalance was addressed within each classifier as noted below.

\subsubsection{Candidate predictors}
All averaging methods were evaluated over the same following base classifiers.

\begin{enumerate}
\item \emph{Logistic Regression ($\ell_1$-penalized).}
We fit a penalized logistic model to the processed feature set, using an $\ell_1$ penalty with strength to encourage sparsity and robustness to correlated predictors. We used a saga solver, $\ell_1$ penalty with  regularization strength of $0.05$, maximal number of iterations as 4000, and tolerance of $10^{-3}$. 

\item \emph{XGBoost (XGB).}  
We trained a gradient-boosted ensemble of shallow decision trees using histogram-based splits and early stopping. Depth, learning rate, and number of estimators were selected via a held-out validation set. 
Hyper parameters were set as
maximal bin of 256, 300 estimators, maximal depth of 5, learning rate $0.1$, 
row subsampling of 0.3, feature subsampling of 0.7, and $\ell_2$ penalty with strength 1.0. 

\item \emph{Histogram-based Gradient Boosting (HGB).}  We train boosted trees with a histogram grow policy, subsampling of observations and features, and $\ell_2$ regularization. Class imbalance was addressed via the standard positive-class weight $\frac{n_{\text{neq}}}{n_{\text{pos}}}$, estimated from the training examples. Hyperparameters (learning rate, depth, estimators, subsampling ratios) were fixed based on validation performance and kept constant across comparisons. Hyperparameters were set to maximal depth of 4,  learning rate $0.07$, and $\ell_2$ regularization with strength $0.5$, and at most 350 iterations.

\item \emph{Multi-layer perceptron (MLP).}
We used a feed-forward network with two hidden layers of sizes 384, 192 and ReLU activations, trained with weight decay and early stopping on a validation split. Weight decay was set to $\alpha=3 \cdot 10^{-3}$, batch size $512$, adaptive learning rate with initial value of $10^{-3}$, 
early stopping with validation fraction $0.12$ and no change for 12 iterations, maximal number of iterations as $300$, and tolerance $10^{-4}$.

\end{enumerate}

\subsection{UCI experiments} \label{sup:uci}
\subsubsection{Data and processing}

We evaluated \methodname{} on standard UCI tasks retrieved from OpenML. 
We chose datasets with relatively large number of observations and features.
For \emph{classification}, we used \texttt{spambase} (target: \texttt{class}) and \texttt{credit-g} (target: \texttt{class}). 
For \emph{regression}, we used \texttt{bike-sharing} (target: \texttt{cnt}) and \texttt{california-housing} (target: \texttt{MedHouseVal}). 

We replaces common ``unknown'' tokens (e.g., \texttt{?}, \texttt{NA}, \texttt{NaN}, \texttt{unknown}) with missing values, stripping whitespace on string columns in each dataset, and dropped features whose missing rate exceeded $40\%$. 

We used an $80\%/20\%$ train-test split in each repetition.
For classification, we performed stratified sampling on the label to preserve class proportions in the test set, and then balanced only the training split by downsampling the majority class to the minority size. 
For regression, we created an approximately balanced split by binning the continuous target into $12$ quantile bins and stratifying on those bins. 
All pre-processing statistics (imputation, scaling, and one- hot vocabularies) were computed on the training partition and applied unchanged to the test data.

To encourage diversity among base models, we formed several heterogeneous, partially overlapping \emph{feature bundles} and trained each model on a bundle tailored to its strengths. 
Bundles were constructed from the training data as follows: 
\begin{itemize}
\item \textbf{B1:} up to 3 Continuous features with highest absolute Pearson correlation with the target (continuous median-imputed for this computation). 
\item \textbf{B2:} up to 3 highest-variance Continuous features. 
\item \textbf{B3:} up to 3 categorical features with highest cardinality. 
\item \textbf{B4:} up to 5 remaining low-cardinality categorical variables. 
\item \textbf{B5:} all categorical features. 
\item \textbf{B6:} all Continuous features. 
\item \textbf{B7:} the union of \textbf{B1} and \textbf{B3}. 
\end{itemize}
Continuous features in non-tree models were median-imputed and standardized. 
Categorical features were imputed to the most frequent level and one-hot encoded with a minimum frequency threshold of $10$ to pool rare levels; unknown categories at test time were ignored. 

\subsubsection{Candidate predictors}

All averaging methods were evaluated on the same base models

\section{Implementation details} \label{sup:implement}

In all our experiments the posterior network for IABMA and the gating network for MoE were implemented as  feed-forward neural networks with hidden layers of size (64, 32, 16) and ReLU activations.
We used Adam optimizer for MoE and IABMA across all experiments.

Hyperparameters for our method and all baselines were tuned via binary search to maximize average performance (accuracy for classification, RMSE for regression) on a held-out repetition excluded from the analysis. The selected values and running times by experiment and method are reported below.

\subsection{Hyperparameters of ensemble methods}

\begin{table}[h!]
\centering
\caption{Hyperparameters of Mixture-of-Experts}
\label{tab:moe-hparams}
\begin{tabular}{lcccc}
\toprule
Hyperparameter & Synthetic & PRISM (Cancer) & Fraud (IEEE-CIS) & UCI \\
\midrule
Learning rate & $10^{-3}$ & $10^{-3}$ & $10^{-3}$ & $10^{-3}$ \\
Batch size & 64 & 128 & 64 & 64 \\
Epochs & 10 & 20 & 10 & 10 \\
\bottomrule
\end{tabular}
\end{table}

\begin{table}[h!]
\centering
\caption{Hyperparameters of Dynamic Local Accuracy (DLA).}
\label{tab:dla-hparams}
\begin{tabular}{lcccc}
\toprule
Hyperparameter & Synthetic & PRISM (Cancer) & Fraud (IEEE-CIS) & UCI \\
\midrule
Neighborhood size $k$ & 50 & 50 & 50 & 50 \\
Temperature $T$ & 0.8 & 1.0 & 1.0 & 1.0 \\
Smoothing $\alpha$ & 1.0 & 1.0 & 1.0 & 1.0 \\
\bottomrule
\end{tabular}
\end{table}

\begin{table}[h!]
\centering
\caption{Synthetic Mixture of Experts (SMC).}
\label{tab:smc-hparams}
\begin{tabular}{lcccc}
\toprule
Hyperparameter & Synthetic & PRISM (Cancer) & Fraud (IEEE-CIS) & UCI \\
\midrule
Confident-cover threshold & 0.6 & 0.6 & 0.6 & 0.6 \\
Cover quantile (reg.) & -- & 0.30 & -- & 0.30  \\
Min coverage per model & 20 & 20 & 20 & 20 \\
Cov. reg. (reg. mix) & 0.9 (Gaussian scores) & 0.9 & 0.9 & 0.7 \\
\bottomrule
\end{tabular}
\end{table}

\begin{table}[h!]
\centering
\caption{Bayesian Hierarchical Stacking (BHS).}
\label{tab:bhs-hparams}
\begin{tabular}{lcccc}
\toprule
Hyperparameter & Synthetic & PRISM (Cancer) & Fraud (IEEE-CIS) & UCI \\
\midrule
Temperature $T$ & 1.0 & 1.0 & 1.0 & 1.0 \\
Prior weight & 1.0 & 1.0 & 1.0 & 1.0 \\
Slab scale $s_0$ & 5.0 & 5.0 & 5.0 & 5.0 \\
Learning rate & $5\times 10^{-3}$ & $5\times 10^{-3}$ & $5\times 10^{-3}$ & $10^{-3}$ \\
Batch size & 64 & 128 & 64 & 64 \\
Epochs & 10 & 20 & 10 & 10 \\
\bottomrule
\end{tabular}
\end{table}

\begin{table}[h!]
\centering
\caption{Anput Adaptive Bayesian Model Averaging (IA-BMA) }
\label{tab:iabma-hparams}
\begin{tabular}{lcccc}
\toprule
Hyperparameter & Synthetic & PRISM (Cancer) & Fraud (IEEE-CIS)  \\
\midrule
Learning rate & $10^{-3}$ & $10^{-3}$ & $10^{-3}$  \\
Batch size & 64 & 128 & 64 \\
Epochs & 10 & 30 & 10  \\
KL weight $\lambda_{\text{KL}}$ & 0.05 & 0.2 & 0.2  \\
\bottomrule
\end{tabular}
\end{table}

\begin{table}[h!]
\centering
\caption{IABMA (PosteriorNet) hyperparameters per UCI dataset.}
\label{tab:iabma-uci-hparams}
\begin{tabular}{lcccc}
\toprule
Hyperparameter & \texttt{Spambase} (clf) & \texttt{Credit-g} (clf) & \texttt{Bike-sharing} (reg) & \texttt{Cal housing} (reg) \\
\midrule
Learning rate & $5\times 10^{-3}$ & $5\times 10^{-3}$ & $1\times 10^{-3}$ & $1\times 10^{-3}$ \\
Batch size & 64 & 64 & 64 & 64 \\
Epochs & 10 & 10 & 10 & 10 \\
KL weight $\lambda_{\mathrm{KL}}$ & 0.1 & 0.1 & 0.8 & 3.0 \\
\bottomrule
\end{tabular}
\end{table}

\subsection{Runtimes}

\begin{table}[t]
\caption{Method runtimes (seconds): mean (sd) across 10 repetitions.}
\label{tab:all_exp_runtimes}
\begin{center}
{\small}
\renewcommand{\arraystretch}{1.15}
\begin{tabular}{lccccc}
\multicolumn{1}{c}{\bf Experiment} & \textbf{MoE} & \textbf{DLA} & \textbf{SMC} & \textbf{BHS} & \textbf{IABMA} \\ \hline\noalign{\vskip 2pt}
Cancer & \shortstack{147.359\\ (5.282)} & \shortstack{22.269\\ (0.454)} & \shortstack{0.072\\ (0.114)} & \shortstack{28.572\\ (1.167)} & \shortstack{252.985\\ (5.571)} \\
\hline\noalign{\vskip 2pt}
Fraud & \shortstack{439.502\\ (129.487)} & \shortstack{8.246\\ (1.719)} & \shortstack{688.473\\ (155.629)} & \shortstack{16.622\\ (3.139)} & \shortstack{461.312\\ (121.168)} \\
\hline\noalign{\vskip 2pt}
Simulation & \shortstack{5.664\\ (0.104)} & \shortstack{0.218\\ (0.008)} & \shortstack{0.079\\ (0.004)} & \shortstack{1.040\\ (0.079)} & \shortstack{5.889\\ (0.038)} \\
\hline\noalign{\vskip 2pt}
Bike-Sharing & \shortstack{25.080\\ (3.780)} & \shortstack{0.868\\ (0.179)} & \shortstack{0.007\\ (0.001)} & \shortstack{21.364\\ (1.063)} & \shortstack{29.663\\ (4.094)} \\
\hline\noalign{\vskip 2pt}
Cal. housing & \shortstack{8.510\\ (0.987)} & \shortstack{0.350\\ (0.041)} & \shortstack{0.006\\ (0.001)} & \shortstack{7.281\\ (0.324)} & \shortstack{9.815\\ (1.029)} \\
\hline\noalign{\vskip 2pt}
Credit-g & \shortstack{3.178\\ (0.049)} & \shortstack{0.439\\ (0.017)} & \shortstack{0.174\\ (0.007)} & \shortstack{1.184\\ (0.148)} & \shortstack{3.345\\ (0.048)} \\
\hline\noalign{\vskip 2pt}
Spambase & \shortstack{16.420\\ (0.287)} & \shortstack{0.642\\ (0.025)} & \shortstack{0.822\\ (0.159)} & \shortstack{1.781\\ (0.147)} & \shortstack{18.651\\ (5.122)} \\
\hline\noalign{\vskip 2pt}
\end{tabular}
\end{center}
\end{table}

\end{document}